\DeclarePairedDelimiter\abs{\lvert}{\rvert}
\DeclarePairedDelimiter\norm{\lVert}{\rVert}
\DeclareMathOperator*{\argmin}{arg\,min}
\DeclareMathOperator*{\argmax}{arg\,max}
\DeclarePairedDelimiter\ceil{\lceil}{\rceil}
\newtheorem{theorem}{Theorem}[section]
\title{Towards joint graph learning and sampling set selection from data}
\name{Shashank N. Sridhara,  Eduardo Pavez, Antonio Ortega,   \thanks{This work was funded in part by the National Science Foundation (NSF CNS-1956190).}}
\address{University of Southern California, Los Angeles, CA, USA}
\long\def\comment#1{}
\newfont{\bbb}{msbm10 scaled 700}
\newfont{\bb}{msbm10 scaled 1000}
\newcommand{\ev}{{\bf e}}
\newcommand{\fv}{{\bf f}}
\newcommand{\nv}{{\bf n}}
\newcommand{\uv}{{\bf u}}
\newcommand{\xv}{{\bf x}}
\newcommand{\yv}{{\bf y}}
\newcommand{\zv}{{\bf z}}
\newcommand{\zerov}{{\bf 0}}
\newcommand{\Am}{{\bf A}}
\newcommand{\Bm}{{\bf B}}
\newcommand{\Dm}{{\bf D}}
\newcommand{\Gm}{{\bf G}}
\newcommand{\Hm}{{\bf H}}
\newcommand{\Id}{{\bf I}}
\newcommand{\Lm}{{\bf L}}
\newcommand{\Mm}{{\bf M}}
\newcommand{\Qm}{{\bf Q}}
\newcommand{\Sm}{{\bf S}}
\newcommand{\Um}{{\bf U}}
\newcommand{\Xm}{{\bf X}}
\newcommand{\Zm}{{\bf Z}}
\newcommand{\Ec}{{\cal E}}
\newcommand{\Gc}{{\cal G}}
\newcommand{\Nc}{{\cal N}}
\newcommand{\Oc}{{\cal O}}
\newcommand{\Rc}{{\cal R}}
\newcommand{\Sc}{{\cal S}}
\newcommand{\Vc}{{\cal V}}
\newcommand{\deltav}{\hbox{\boldmath$\delta$}}
\newcommand{\Sigmam}{\hbox{\boldmath$\Sigma$}}
\newcommand{\Omegam}{\hbox{\boldmath$\Omega$}}
\begin{document}
\ninept
\maketitle
\begin{abstract}
We explore the problem of sampling graph signals in scenarios where the graph structure is not predefined and must be inferred from data. In this scenario, existing approaches rely on a two-step process, where a graph is learned first,  followed by sampling. More generally, graph learning and graph signal sampling have been studied as two independent problems in the literature. 
This work provides a foundational step towards jointly optimizing the graph structure and sampling set. Our main contribution, \textit{Vertex Importance Sampling} (VIS), is to show that the sampling set can be effectively determined from the vertex importance (node weights) obtained from graph learning. 
We further propose \textit{Vertex Importance Sampling with Repulsion} (VISR), a greedy algorithm where spatially-separated ``important'' nodes are selected to ensure better reconstruction. Empirical results on simulated data show that sampling using VIS and VISR leads to competitive reconstruction performance and lower complexity than the conventional two-step approach of graph learning followed by graph sampling.

\end{abstract}
\begin{keywords}
Graph signal sampling, Graph learning, Graph signal reconstruction, D-optimality.
\end{keywords}
\section{Introduction}
\label{sec:intro}
Graph signal processing (GSP) provides a powerful framework for analysis, denoising, sampling, and interpolation of signals defined on graphs \cite{ortega2022_GSP, chen2015_graphsampling, narang2013_graphinterpolation}. \textit{Sampling graph signals}
is a fundamental problem in GSP \cite{chen2015_graphsampling, anis2016_spectralproxies, jayawant2021_avm, bai2020_bsgda}, where the goal is selecting a subset of graph nodes to reconstruct a smooth signal from the signal samples.

In some applications, the graph structure is not predefined, although the data (signal values on vertices) is available \cite{dong2019_learningL, Eglimez2017_graphlearn}. In such cases, the graph structure must first be inferred from the data before graph signal sampling. Learning graph structure from data is a well-studied topic in the GSP literature \cite{dong2019_learningL, Eglimez2017_graphlearn, pavez2019_Llearn, girault2023_LQlearn}. However, when the objective is to determine a sampling set without a predefined graph, the conventional two-step process—first learning the graph and then performing sampling—can become computationally expensive. Our work specifically focuses on obtaining a sampling set in scenarios where graph structure is not readily available.

Graph learning from data can be formulated as a Maximum A-Posteriori (MAP) parameter estimation of Gaussian-Markov Random fields (GMRF), whose inverse covariance is graph Laplacian matrix \cite{dong2019_learningL, Eglimez2017_graphlearn, pavez2019_Llearn}. Specifically, learning a Diagonally Dominant Graph Laplacian (DDGL) allows for the simultaneous optimization of both edge weights and \textit{vertex importance}, 
the latter represented by a diagonal matrix added to the Combinatorial Graph Laplacian (CGL) \cite{Eglimez2017_graphlearn, girault2023_LQlearn}. 
Further, the learned vertex importance values can be interpreted as an inner-product matrix to define a generalized \textit{Graph Fourier Transform} (GFT) \cite{girault2018_IAGFT, girault2023_LQlearn}.

Given a graph and its corresponding graph variation operator, e.g., graph Laplacian, graph signal sampling can be formulated as Minimum Mean Squared error Estimation (MMSE) of GMRF signals from the samples \cite{Gadde2015_ProbSampling}. Several graph sampling algorithms have been developed within this framework \cite{bai2020_bsgda, Gadde2015_ProbSampling, sridhara2024_pcsampling}.

In this work, we show that graph learning from data can provide insights for determining the sampling set. In particular, the common assumption in both the formulations, i.e.,  graph signals are realizations of independent identically distributed (i.i.d.) zero-mean multivariate Gaussian distributions with inverse covariance equal to the graph Laplacian,  can be leveraged for simultaneous optimization of graph structure (edge set and edge weights) and the sampling set. 
To achieve this, we formulate sampling set selection using the D-optimal objective function. 
We identify similarities between the D-optimal sampling and graph learning objectives. We then show that the sampling set can be directly obtained from the \textit{vertex importances} learned from finding the optimal diagonally dominant graph Laplacian \cite{girault2023_LQlearn}. However, the sampling set based solely on vertex importance could result in spatially close samples, leading to subpar reconstruction accuracy at higher sampling rates. To address this limitation, we propose a low-complexity sampling algorithm that selects ``important'' nodes that are spatially separated from each other. 

Our contributions are: i)  Two algorithms, \textit{Vertex Importance sampling (VIS)} and \textit{Vertex Importance Sampling with Repulsion (VISR)}, providing approximate solutions to the D-optimal sampling objective  using learned vertex importance, 
ii)  a greedy algorithm for the D-optimal sampling similar to that of  \cite[Algorithm 1]{krause2008_sensor} to compare the sampling sets from VIS and VISR, and iii) an empirical evaluation showing that using the DDGL for reconstruction 
leads to better signal reconstruction than using only the CGL. 

The rest of the paper is organized as follows. GSP preliminaries and notations are given in \autoref{sec:preliminaries}. Sampling formulation and a greedy solution to the D-optimal objective are given in \autoref{sec:sampling_formulation}. We propose VIS and VISR algorithms in \autoref{sec:sampling_from_q}. Experimental results and conclusions are in  \autoref{sec:experiments} and \autoref{sec:conclusion}, respectively.

\section{Notations and Background}
\label{sec:preliminaries}
\subsection{Graph signal processing}
Throughout the paper, we represent sets using calligraphic uppercase, $\Sc$, 
matrices by bold uppercase, $\Xm$, vectors by bold lowercase, $\xv$, and scalars by plain lowercase, $x$.
A graph $\Gc = (\Vc, \Ec)$ consists of a set of vertices $\Vc$ of size $N$ and an edge set $\Ec \subseteq \Vc \times \Vc$. A graph signal is a vector $\fv \in \mathbb{R}^{N}$, whose $i$-th entry corresponds to the signal at the $i$-th node of the graph. We define the edge weight between vertices $i$ and $j$ as $w_{ij} > 0$. In this work, we consider only undirected graphs, so we have $w_{ij} = w_{ji}$, and we have a symmetric adjacency matrix, $\Am \in \mathbb{R}^{N \times N}$, whose $(i, j)$-th entry is $w_{ij}$, if $(i, j) \in \Ec$ and $0$ otherwise. The degree of a vertex is denoted by $d_{i} = \sum_{(i, j) \in \Ec} w_{ij}$, and the diagonal degree matrix is defined as $\Dm = \mathrm{diag}(d_{1}, d_{2} \cdots d_{N})$. The combinatorial graph Laplacian, $\Lm := \Dm - \Am$ is a symmetric, positive semi-definite matrix \cite{ortega2022_GSP}.

The irregularity-aware graph Fourier transform (GFT) introduced in  \cite{girault2018_IAGFT} is based on a positive semi-definite graph variation operator $\Mm \succeq 0$, which measures graph signal smoothness, and a Hilbert space with inner product defined as $\langle \xv, \yv\rangle_{\Qm} = \yv^{T} \Qm \xv$, with induced Q-norm $\norm{\xv}_{\Qm}^{2} = \langle\xv, \xv \rangle_{\Qm}$, where $\Qm \succ 0$. The generalized GFT for the $\Qm$-inner product is the $(\Mm, \Qm)-$GFT. 
The graph-Fourier basis $\Um =  [\uv_{1}, \uv_{2}, \cdots \uv_{N}]$ is now orthogonal with respect to the $\Qm$-inner product, i.e., $\Um^{H} \Qm \Um = \mathbf{I}$. The eigenvalues corresponding to $(\Mm, \Qm)-$GFT, $0  = \lambda_{1} \leq \lambda_{2}, \cdots, \leq \lambda_{N}$ represent the graph frequencies \cite{girault2018_IAGFT}.

\subsection{Graph and vertex importance learning}
Assuming graph signals $\{\fv^{(k)}\}_{k}$ are realizations of a zero-mean, i.i.d.~multivariate Gaussian distribution, the goal of graph learning is to model the signals with a graph whose combinatorial graph Laplacian is such that the data satisfies graph Wide Sense Stationarity (gWSS) with gPSD $\gamma(0) = 0$ and $\gamma(\lambda) = 1/\lambda$, when $\lambda > 0$ \cite{pavez2019_Llearn, girault2023_LQlearn}. Using the $(\Lm, \mathbf{I})-$GFT, this translates into a covariance matrix $\Sigmam = \Lm^{\dagger}$, where $\Lm^{\dagger}$ denotes the pseudo-inverse of $\Lm$. Under this setting, the graph learning problem is traditionally formulated as a maximum likelihood estimation, minimizing the cost function \cite{pavez2019_Llearn}: 
\begin{equation}
    \label{eqn:L_learning}
    J(\Lm) = -\mathrm{log det}(\Lm + \frac{1}{N}\mathbf{1} \mathbf{1}^{T}) + \mathrm{tr}(\Lm \Sm),
\end{equation}
where $\Sm = \frac{1}{k} \sum_{k}\fv^{(k)}\fv^{(k)\top}$ is the empirical covariance matrix. The cost function in \eqref{eqn:L_learning} can be efficiently optimized using the coordinate minimization approach proposed in \cite{pavez2019_Llearn}.

The joint graph and vertex importance learning proposed in \cite{girault2023_LQlearn} requires solving the following optimization:
\begin{equation}
    \label{eqn:LQ_learning}
    J(\Lm, \Qm) = -\mathrm{log det}(\Lm + \Qm) + \mathrm{tr}((\Lm + \Qm)\Sm),
\end{equation}
where $\Qm$ is a diagonal matrix with $\Qm_{ii}>0$ representing vertex importance. 
The cost function \eqref{eqn:LQ_learning} can be solved using coordinate minimization by updating edges weights and \textit{vertex importances} iteratively \cite{girault2023_LQlearn}. 
Learning a diagonally dominant graph Laplacian (DDGL) model can be viewed as learning a combinatorial graph Laplacian $\Lm$ and an inner-product matrix $\Qm$. 
Through joint optimization, we learn the graph such that the data is gWSS with gPSD $\gamma(\lambda) = (1 + \lambda)^{-1}$ when using $(\Lm, \Qm)-$GFT. The covariance matrix with respect to $(\Lm, \Qm)-$GFT is given by $\Sigmam = [\Lm + \Qm]^{-1}$ \cite[Theorem 1]{girault2023_LQlearn}.

\section{Graph Signal Sampling Formulation}
\label{sec:sampling_formulation}
Our graph signal sampling formulation is based on the same signal model used for graph learning, i.e., $\fv \sim \Nc(\zerov, \Sigmam = \Omegam^{\dagger})$, where $\Omegam$ is a positive semi-definite graph operator (e.g. $\Lm$ or $\Lm + \Qm$). We define the sampling process as \cite{chen2015_graphsampling, jayawant2021_avm, anis2016_spectralproxies}
\begin{equation*}
    \yv_{\Sc} = \fv_{\Sc} + \nv = \Id_{\Sc}^{\top}\fv + \nv,
\end{equation*}
where $\nv \in \mathbb{R}^{\abs{\Sc}}$ is the noise introduced during sampling and $\Id_{\Sc}$ 
is $N\times \abs{\Sc}$ sampling matrix and $\Hm = \Id_{\Sc}\Id_{\Sc}^\top$ is a diagonal matrix: 
\begin{equation}
\label{eqn:sampling_upsampling}
    \Hm_{ii} =  \begin{dcases}
    1, & \text{if }  i \in \Sc\\
    0,              & \text{otherwise}.
\end{dcases}
\end{equation}
We assume the signal is reconstructed using the graph Laplacian regularization  (GLR) approach,  given by \cite{bai2020_bsgda, sridhara2024_pcsampling}: 
\begin{equation}
    \label{eqn:reconstruction}
    \hat{\fv} = \argmin_{\fv} \norm{\Id_{\Sc}^{\top}\fv - \yv_{\Sc}}_{2}^{2} + \mu \fv^{\top}  \Omegam  \fv, 
\end{equation}
whose closed-form solution is  \cite{bai2020_bsgda}:
\begin{equation}
    \label{eqn:closedform_soln}
    \hat{\fv} = (\Hm + \mu \Omegam)^{-1} \Id_{\Sc}\yv_{\Sc}.
\end{equation}
Next, we define an objective function for sampling. 
\subsection{Sampling objective}
From \eqref{eqn:closedform_soln}, $\hat{\fv}$ is the solution system of linear equations where $\Bm := (\Hm + \mu \Omegam)$ is the coefficient matrix. Since $\Bm$ is a positive-definite matrix for any given positive semi-definite graph operator $\Omegam$ \cite{bai2020_bsgda}, the inverse of $\Bm$ always exists. Assuming that $\nv \sim (\zerov, \Id)$, we can write the error vector and error covariance matrix corresponding to the reconstruction in \eqref{eqn:closedform_soln} as: 
\begin{align}
    \ev = (\hat{\fv} - \fv) = - \mu\Bm^{-1} \Omegam \fv + \Bm^{-1} \Id_{\Sc} \nv \\
    \mathbb{E}[\ev \ev^{\top}] = \Bm^{-1} - (\mu - \mu^{2}) \Bm^{-1}\Omegam \Bm^{-1}.
\end{align}
\begin{theorem}
\label{thrm:approx_error_covariance}
    For small $\mu$, the error covariance can be approximated as 
    \begin{equation}
        \mathbb{E}[\ev \ev^{\top}] = \Bm^{-1} - (\mu - \mu^{2}) \Bm^{-1}\Omegam \Bm^{-1} \approx (\Hm + \gamma \Omegam)^{-1}, 
    \end{equation}
    where $\gamma = 2\mu - \mu^2$
\end{theorem}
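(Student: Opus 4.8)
The plan is to avoid any genuine limiting argument and instead reduce the claim to a single \emph{exact} matrix identity, after which the stated approximation amounts to discarding one explicitly-quadratic-in-$\mu$ term. First I would rewrite the exact error covariance by factoring out $\Bm^{-1}$ on both sides:
\[
\mathbb{E}[\ev\ev^{\top}] = \Bm^{-1}\bigl(\Bm - (\mu-\mu^{2})\Omegam\bigr)\Bm^{-1} = \Bm^{-1}\bigl(\Hm + \mu^{2}\Omegam\bigr)\Bm^{-1},
\]
where the last equality uses $\Bm = \Hm + \mu\Omegam$, so that $\Bm - (\mu-\mu^{2})\Omegam = \Hm + \mu^{2}\Omegam$. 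Writing $\Cm := \Hm + \mu^{2}\Omegam$, note that $\Cm \succ 0$ by the same argument that makes $\Bm$ positive definite for any $\Omegam \succeq 0$, so $\Cm^{-1}$ exists and the right-hand side may be inverted to give $(\mathbb{E}[\ev\ev^{\top}])^{-1} = \Bm\,\Cm^{-1}\,\Bm$.

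Next I would expand this inverse around $\Cm$. Since $\Bm = \Cm + (\mu-\mu^{2})\Omegam$, a direct multiplication yields the \emph{exact} identity
\[
(\mathbb{E}[\ev\ev^{\top}])^{-1} = \Cm + 2(\mu-\mu^{2})\Omegam + (\mu-\mu^{2})^{2}\,\Omegam\,\Cm^{-1}\,\Omegam.
\]
The first two terms collapse cleanly: $\Cm + 2(\mu-\mu^{2})\Omegam = \Hm + \mu^{2}\Omegam + (2\mu - 2\mu^{2})\Omegam = \Hm + (2\mu-\mu^{2})\Omegam = \Hm + \gamma\Omegam$, exactly the target operator with $\gamma = 2\mu-\mu^{2}$. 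Thus the only obstruction to the claimed identity is the residual term $\Rm := (\mu-\mu^{2})^{2}\,\Omegam\,\Cm^{-1}\,\Omegam$, and the approximation $\mathbb{E}[\ev\ev^{\top}] \approx (\Hm+\gamma\Omegam)^{-1}$ is precisely the statement that $\Rm$ is negligible. The final step is to invert back: once $\Rm$ is dropped, $(\mathbb{E}[\ev\ev^{\top}])^{-1} \approx \Hm + \gamma\Omegam$ gives the theorem.

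The step I expect to be the main obstacle is justifying that $\Rm$ is truly higher order. Its scalar prefactor $(\mu-\mu^{2})^{2} = O(\mu^{2})$ vanishes quadratically, which is what the ``small $\mu$'' hypothesis is meant to exploit. However, $\Cm^{-1} = (\Hm + \mu^{2}\Omegam)^{-1}$ is \emph{not} uniformly bounded as $\mu \to 0$: on the range of $\Hm$ (the sampled coordinates) it behaves like $O(1)$, but on $\ker\Hm$ (the unsampled coordinates) it grows like $O(\mu^{-2})$, so on those directions $\Rm$ need not be small. A clean fully-rigorous statement therefore requires either measuring $\Rm$ in a norm or subspace where $\Cm^{-1}$ stays $O(1)$, or restricting the comparison to the sampled block; for the purposes here it suffices to observe that the prefactor of $\Rm$ is $O(\mu^{2})$ and hence negligible relative to $\Hm+\gamma\Omegam$ for small $\mu$, which is the sense in which the approximation is used.
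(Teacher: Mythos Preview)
Your argument is correct and lands on the same target $\Hm+\gamma\Omegam$, but the route differs from the paper's. The paper factors $\mathbb{E}[\ev\ev^\top] = \bigl(\Id - (\mu-\mu^2)\Bm^{-1}\Omegam\bigr)\Bm^{-1}$ and then invokes the first-order Neumann expansion $(\Id-\Xm)^{-1}\approx\Id+\Xm$ (under the hypothesis $\|\Xm\|<1$ with $\Xm=(\mu-\mu^2)\Bm^{-1}\Omegam$) to get $(\mathbb{E}[\ev\ev^\top])^{-1}\approx\Bm(\Id+\Xm)=\Bm+(\mu-\mu^2)\Omegam=\Hm+\gamma\Omegam$. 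You instead derive the \emph{exact} identity $(\mathbb{E}[\ev\ev^\top])^{-1}=\Hm+\gamma\Omegam+(\mu-\mu^2)^2\,\Omegam\Cm^{-1}\Omegam$ with $\Cm=\Hm+\mu^2\Omegam$, and then drop a single closed-form remainder. Your version has the advantage of an explicit error term rather than an infinite Neumann tail, and the caveat you flag---that $\Cm^{-1}$ blows up like $\mu^{-2}$ on $\ker\Hm$, so the remainder is not uniformly $o(1)$---applies with equal force to the paper's Neumann hypothesis, since $\Bm^{-1}$ grows like $\mu^{-1}$ on that same subspace and hence $\|(\mu-\mu^2)\Bm^{-1}\Omegam\|$ is generically $O(1)$ rather than small. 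Both arguments are therefore heuristic at the same level; yours simply makes the caveat explicit.
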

\begin{proof}
The proof relies on the fact that $\norm{\Bm^{-1}(\mu - \mu^{2}) \Omegam} < 1$. The first-order approximation of the Neumann series on matrices (Th. 5.6.12 of \cite{horn2012_matrix}) gives the desired result. Due to space limitations, we omit the detailed proof.
\end{proof}
Different sampling objectives can be defined to measure the reconstruction accuracy as a function of error covariance \cite{anis2016_spectralproxies}. In our work, we consider the widely used D-optimality criterion, which minimizes the determinant of the error covariance matrix \cite{jayawant2021_avm}. The D-optimal sampling objective based on the approximated error covariance can be formulated as
\begin{equation}
    \label{eqn:d_opt_obj}
    \Hm^{\star} = \argmin_{\Hm_{ii} = \{0, 1\}, \sum_{k = 0}^{N} \Hm_{ii} = \abs{\Sc}} -\mathrm{logdet} [\Hm + \gamma \Omegam], 
\end{equation}
an NP-hard combinatorial optimization problem  \cite{jayawant2021_avm}. 
In what follows, we first present a greedy algorithm that approximately solves \eqref{eqn:d_opt_obj} and then a sampling algorithm based on the graph learning framework (\autoref{sec:sampling_from_q}).

\subsection{Greedy sampling set selection}
\label{subsec:greedy_sampl}
In our greedy approach, the first selected sample is,
\begin{equation*}
    i_{1}^{*} = \argmax_{j \in \mathcal{V}} \mathrm{det} [\deltav_{j} \deltav_{j}^{T} + \gamma \Omegam] = \argmax_{j \in \mathcal{V}}  \deltav_{j}^{T} \Omegam^{-1} \deltav_{j},
\end{equation*}
where $\deltav_{j} \in \mathbb{R}^{N}$ is the delta function at node $j$, with value $1$ at the $j^{\mathrm{th}}$ node and $0$ elsewhere. 
The second equality is due to the Matrix-Determinant lemma \cite{horn2012_matrix}. Thus, the first iteration chooses the node corresponding to the index of the maximum diagonal element of $\Omegam^{-1}$, with   $\Oc(N^{3})$ computations required to obtain $\Omegam^{-1}$. 

At the $k^{\mathrm{th}}$ iteration, $\abs{S} = k-1$, the next sample is chosen as: 
\begin{equation}
   i_{k}^{*} = \argmax_{j \in \Sc^{c}} \deltav_{j}^{T} \Gm_{k-1}^{-1} \deltav_{j},
\end{equation}
where $\Gm_{k-1} = (\Id_{\Sc}\Id_{\Sc}^{T} + \gamma \Omegam)$. Once the $k^{\mathrm{th}}$ sample is updated, the inverse of $\Gm_{k}$ can be computed using the Sherman-Morrison formula:
\begin{equation}
    \label{eqn:SM_inv}
    \Gm_{k}^{-1} = \Gm_{k-1}^{-1}  -  \frac{\Gm_{k-1}^{-1} \deltav_{i_{k}^{*}} \deltav_{i_{k}^{*}}^{T} \Gm_{k-1}^{-1}}{ 1 + \deltav_{i_{k}^{*}}^{T} \Gm_{k-1}^{-1} \deltav_{i_{k}^{*}}^{T}}.
\end{equation}
Therefore, we do not need to compute the matrix inverse and determinant at subsequent iterations. Instead, at each iteration we select a sample $j$ from $\Sc^{c}$ such that $\deltav_{j}^{T} \Gm_{k}^{-1} \deltav_{j}$ is maximum, while the inverse of $\Gm_{k}$ can be updated using \eqref{eqn:SM_inv}.

\section{Sampling based on  vertex importances}
\label{sec:sampling_from_q}

\begin{algorithm}[t]
\caption{Vertex Importance Sampling with repulsion (VISR)}\label{algo:sampling_with_repulsion}
\begin{algorithmic}[1]
\Procedure{Sampling}{$\Am, \Dm, \Qm, k$}
\State $\Zm(p) \gets \sum_{l=1}^{p}(\Dm^{-1} \Am)$
\Comment{p-hop localized filter}
\State $\zv_{i}^{(p)} \gets \Zm(p)\deltav_{i},  \forall i$ \Comment{Columns of $\Zm(p)$}
\State $[q_{1} \ldots q_{N}] \gets diag(\Qm)$ \Comment{Vertex importances}

\While{$\abs{\Sc} \leq  k$} 
\State $x^{*}\gets \argmax_{x \in \Sc^{c}}  (q_{x} - \sum_{y \in \Sc} \langle \zv_{x}^{(p)}, \zv_{y}^{(p)} \rangle)$
\State $\Sc \gets \Sc \cup x^{*}$

\EndWhile
\State \textbf{return} $\Sc$
\EndProcedure
\end{algorithmic}
\end{algorithm}
When $\Omegam = (\Lm + \Qm)$, 
we can rewrite \eqref{eqn:d_opt_obj} as 
\begin{equation*}
    \Hm^{\star} = \argmin_{\Hm_{ii} = \{0, 1\}, \sum_{k = 0}^{N} \Hm_{ii} = \abs{\Sc}} -\mathrm{logdet} [\Hm + \gamma (\Qm + \Lm)],
\end{equation*}
which can be rewritten by factoring $\gamma$ as
\begin{equation}
    \label{eqn:sampl_obj_LQ}
    \Hm^{\star} = \argmin_{\Hm_{ii} = \{0, 1\}, \sum_{k = 0}^{N} \Hm_{ii} = \abs{\Sc}} -\mathrm{logdet} [(\frac{1}{\gamma}\Hm +\Qm) + \Lm].
\end{equation}
Comparing the sampling and graph learning objectives, \eqref{eqn:sampl_obj_LQ} and \eqref{eqn:LQ_learning}, we observe that their first terms (log-determinant) are similar. At optimality, the trace term in \eqref{eqn:LQ_learning} becomes constant \cite{girault2023_LQlearn}. 
Therefore, due to Weyl’s theorem \cite[Corollary 4.3.9]{horn2012_matrix}, the indices $i$ of $\Hm_{ii} = 1$ that corresponds to maximizing the determinant in \eqref{eqn:sampl_obj_LQ} will be most likely the same indices corresponding to the highest values of vertex importance ($q_{i} = \Qm_{ii}$).
Based on this observation, we can formulate a \textit{Vertex Importance Sampling (VIS)}, sampling set selection, where the sampling set $\Sc_{k}$ of cardinality $k$ is chosen  as follows: 
\begin{align}
    \label{eqn:sampl_from_Q}
    \Sc_{k} = \{j_{1}, j_{2}, \ldots j_{k}\} \text{ s.t } \nonumber \\
    q_{j_{1}} \geq q_{j_{2}} \geq \ldots, q_{j_{k}} \geq  q_{i}, \, \forall i \notin \Sc_{k}.
\end{align}
This sampling set selection method requires no additional computation once the graph has been learned since it is directly obtained from learned \textit{vertex importance} values. 
However, since we sample every node independently---meaning the selection of a new node does not depend on the previously sampled nodes, the algorithm may select samples that are spatially close. We address this limitation by providing a low-complexity sampling algorithm based on vertex importance that incorporates \textit{repulsion} between selected samples.

\subsection{Vertex Importance Sampling with Repulsion}
We propose a low-complexity sampling algorithm that introduces repulsion between selected samples using a $p-$hop localized graph filter (\autoref{algo:sampling_with_repulsion}).
Following the intuition from  \cite{sridhara2024_pcsampling, sakiyama2019_edfree, jayawant2018_distancesampl}, we select the $k$ most ``informative'' nodes that are placed as far as possible from each other. Following \cite{sridhara2024_pcsampling}, we achieve this by defining a $p$-hop polynomial graph low-pass filter:
\begin{equation}
    \label{eqn:polynomial_filter}
    \Zm:= \sum_{l = 1}^{p} (\Dm^{-1}\Am)^{l}.
\end{equation}
The $i^{\text{th}}$ column of $\Zm(p)$ can be seen as a low-pass filtered delta function $\deltav_{i}$, localized at node $i$. For any $p$ we have:
\begin{equation}
 \label{eqn:p_order_approx}
\zv_{i}^{(p)} := \Zm(p) \deltav_{i} = (\Dm^{-1}\Am) \deltav_{i} + \cdots + (\Dm^{-1}\Am)^{p}\deltav_{i}.
\end{equation}
The parameter $p$ defines the localization of the polynomial graph filter $\Zm(p)$, i.e., 
when $p = 1$,  $\zv_{i}^{(1)} = (\Dm^{-1}\Am) \deltav_{i}$ which results in non-zero values only for the 1-hop neighbors of node $i$.  Similarly, $\zv_{i}^{(2)} = (\Dm^{-1}\Am) \deltav_{i} +(\Dm^{-1}\Am)^{2}\deltav_{i}$ contains non-zero values within 2-hops of $i$. In general, $\zv_{i}^{(p)}$ has non-zero values at nodes within the $p$-hop neighborhood of node $i$. 

We use the $p$-hop localization of $\zv_{i}^{(p)}$ to achieve separation between the selected samples. 
Particularly, by calculating the inner-products $\langle \zv_{i}^{(p)}, \zv_{j}^{(p)} \rangle, \forall i \neq j$, we prioritize selecting the most important nodes (based on learned vertex importance values  $q_{i}$), while also ensuring that the selected samples are spaced apart by at least $p$-hop neighbors. 
We adopt a greedy strategy similar to that of \cite{sakiyama2019_edfree, sridhara2024_pcsampling}, whereby, at each iteration, we add to the sampling set the point $x^{*}$ that solves:
\begin{equation}
    \label{eqn:isr_update}
    x^{*} = \argmax_{x \in \Rc}  (q_{x} - \sum_{y \in \Sc} \langle \zv_{y}^{(p)}, \zv_{x}^{(p)} \rangle).  
\end{equation}

\begin{figure}[t]
    \centering
    \includegraphics[width=0.7\linewidth]{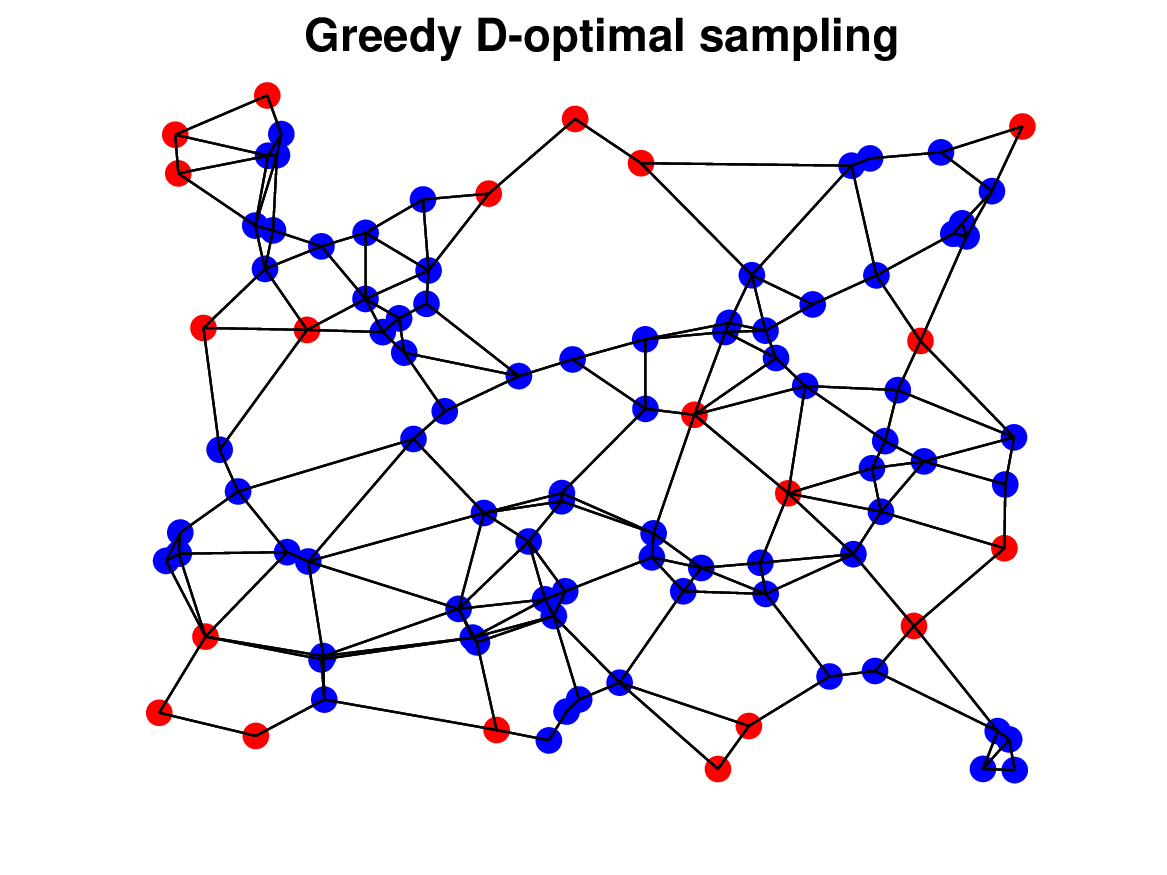}
    \caption{Vertices selected from the greedy solution to D-optimal sampling objective}
    \label{subfig:d_opt_sampl}
\end{figure}
To ensure sufficient repulsion, we set the value of $p$ to a small value when the sampling rate ($\alpha = k/N$) is high and to a large value when the sampling rate is low. We use the same  rule proposed in \cite[Section 3D]{sridhara2024_pcsampling} to select $p$ as
\begin{equation}
    \label{eqn:p_val}
    p = \ceil*{1/(2\alpha)}.
\end{equation}
This flexibility of selecting the parameter $p$ based on sampling rate makes  \autoref{algo:sampling_with_repulsion} scalable to large graphs. Contrary to existing sampling algorithms \cite{anis2016_spectralproxies, bai2020_bsgda, jayawant2021_avm}, the proposed VISR algorithm is computationally efficient because it uses learned vertex importance as an \textit{auxiliary information} and the repulsion is obtained by precomputed inner-products to obtain the sampling set. In \cite{anis2016_spectralproxies, jayawant2021_avm}, the inner products are computed in every iteration, making them computationally expensive. 

\section{Experiments}
\label{sec:experiments}

To experimentally validate our proposed sampling algorithms, VIS and VISR, we use synthetic data where node locations are obtained by uniformly sampling from the $[0,1] \times [0,1]$ Euclidean plane at $N = 100$ locations. These node locations are fixed across all experiments. 
We consider a Gaussian process with covariance matrix $\Sm$ to model the continuous process in space, where $\Sm_{ii} = 10, \forall i$ and off-diagonal entries $\Sm_{ij} = 10 \mathrm{exp}(-d_{ij}/r), \forall i \neq j$. Here $d_{ij}$ is the distance between nodes $i$ and $j$, and $r$ quantifies the range of signal correlation across the space. 
For all experiments, we fix $r = 0.02$. We generate $1000$ i.i.d.~realizations of signal $\fv$ and obtain its sample covariance matrix. We use this sample covariance to learn a combinatorial graph Laplacian $\Lm$ and a diagonally dominant graph Laplacian $(\Lm + \Qm)$ by solving \eqref{eqn:L_learning} and \eqref{eqn:LQ_learning}, respectively.
\begin{figure*}[t]
     \centering
        \begin{subfigure}[b]{0.33\textwidth}
            \centering
            \includegraphics[width=\linewidth]{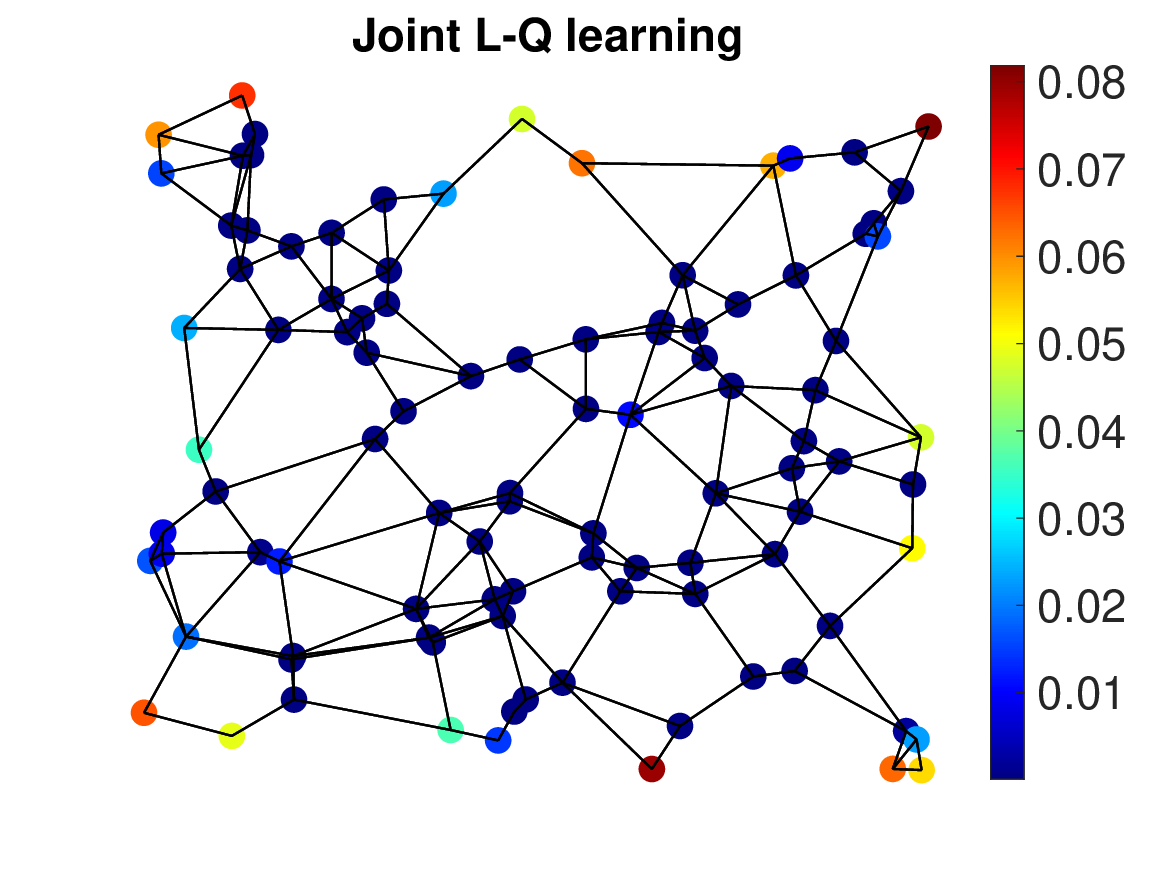}
            \caption{}
            \label{subfig:joint_lq_learning}
        \end{subfigure}
     \begin{subfigure}[b]{0.33\textwidth}
         \centering
         \includegraphics[width=\linewidth]{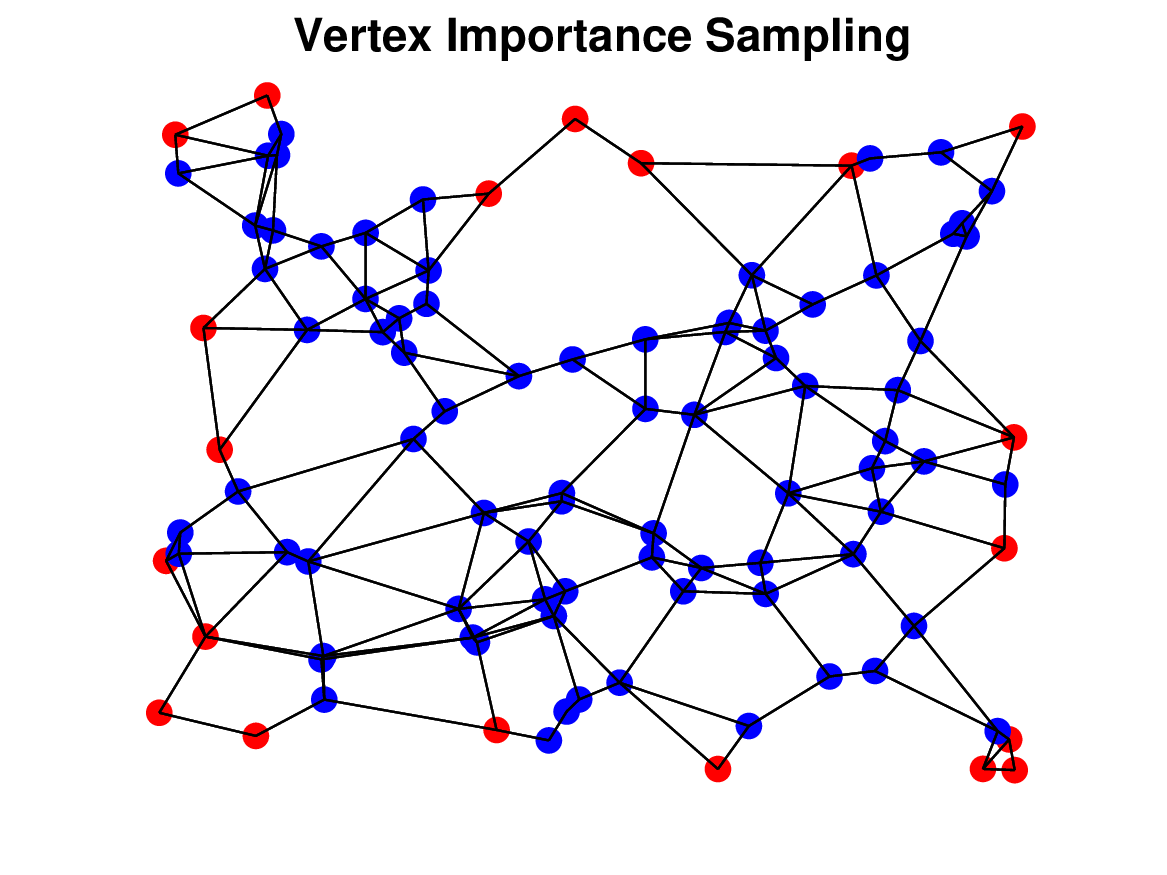}
         \caption{$\abs{\Sc} = 20$}
         \label{subfig:sampling_from_Q}
     \end{subfigure}
      \begin{subfigure}[b]{0.33\textwidth}
         \centering
         \includegraphics[width=\linewidth]{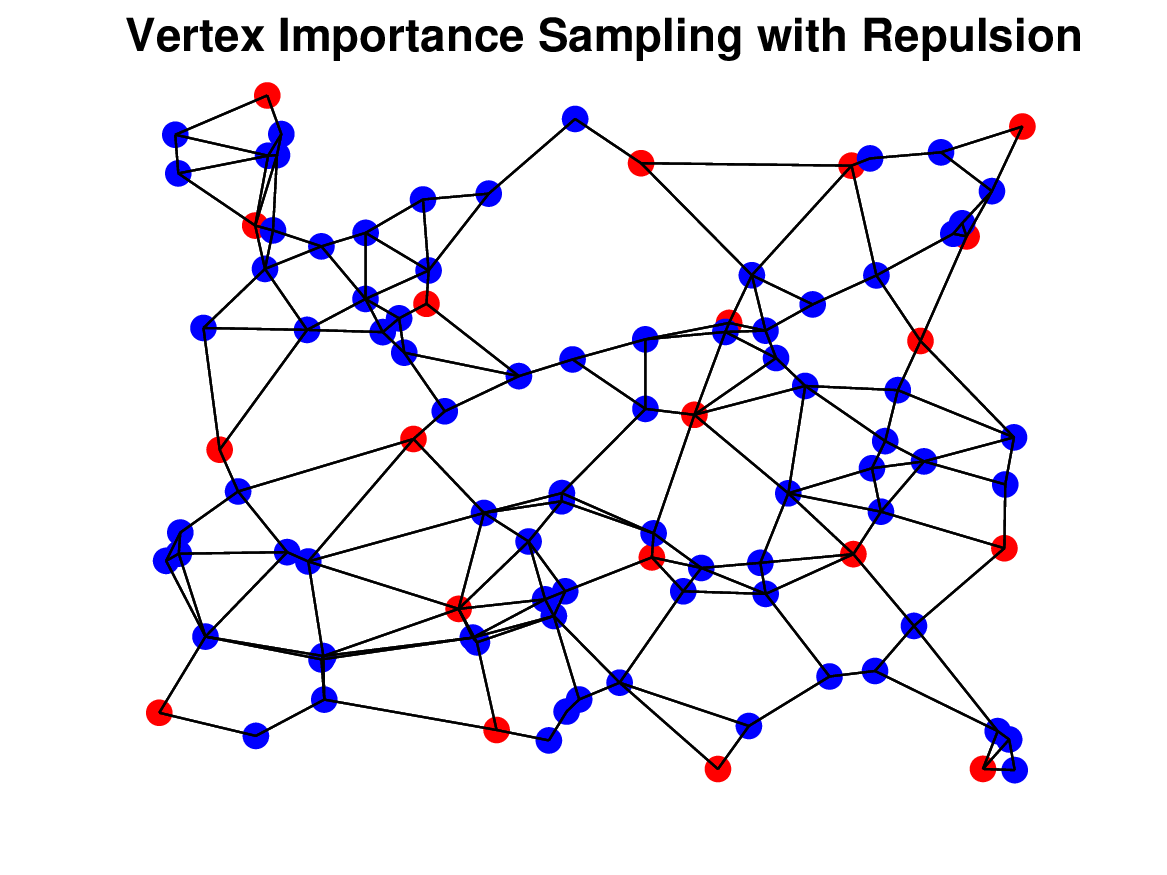}
         \caption{$\abs{\Sc} = 20$}
         \label{subfig:ISR_sampling}
     \end{subfigure}
\caption{Visual comparison of vertices selected. (a) learned graph with vertex importance shown in the color bar, (b) sampling set from VIS, (c) sampling set from VISR }
\label{fig:sampling_set_comparison}
\end{figure*}

We generate $M = 100$ random signals, each drawn from multivariate Gaussian distribution $\fv_{k} \sim \Nc(\zerov, \Sm)$, and add noise $\nv \sim \Nc(\zerov, \sigma_{n}^{2}\Id)$, where $\sigma_{n}$ is the noise level. We compare different sampling sets by reconstructing the sampled signal with noise using \eqref{eqn:reconstruction}. We evaluate the average MSE between reconstructed and original signals as follows,
\begin{equation}
    \label{eqn:avg_MSE}
    \overline{\mathrm{MSE}} = \frac{1}{M}\sum_{k = 1}^{M} [\frac{1}{N} \norm{\hat{\fv}_{k} - \fv_{k}}^{2}].
\end{equation}

\subsection{Comparison between VIS and VISR}
We evaluate the performance of VIS and VISR compared to the greedy solution to the D-optimal sampling objective presented in \autoref{subsec:greedy_sampl}. We present the sampling set from greedy solution to D-optimal objective in \autoref{subfig:d_opt_sampl}. We learn the DDGL using \eqref{eqn:LQ_learning} and show the result of the joint graph and vertex importance learning and sampling sets obtained from VIS and VISR in \autoref{fig:sampling_set_comparison}. 
As shown in \autoref{subfig:sampling_from_Q}, sampling from VIS results in spatially close samples since we sample every node independently. This is consistent with our analysis in \autoref{sec:sampling_from_q}. 
Conversely, the samples from VISR, as shown in \autoref{subfig:ISR_sampling} are spatially separated. 
We observe a significant intersection between the sampling sets obtained from VIS (\autoref{subfig:sampling_from_Q}) and greedy D-optimal sampling (\autoref{subfig:d_opt_sampl}), which supports our claim that indices corresponding to the highest values of vertex importance will approximately solve the D-optimal sampling objective. We present a quantitative evaluation of reconstruction accuracy when the samples are obtained using VIS, VISR, and greedy solution to D-optimal objective in \autoref{subfig:VIS_vs_VISR}. At higher sampling rates, VISR sampling achieves smaller $\overline{\mathrm{MSE}}$ than VIS and the greedy D-optimal solution.


\begin{figure*}[ht]
     \centering
        \begin{subfigure}[b]{0.33\textwidth}
            \centering
            \includegraphics[width=\linewidth]{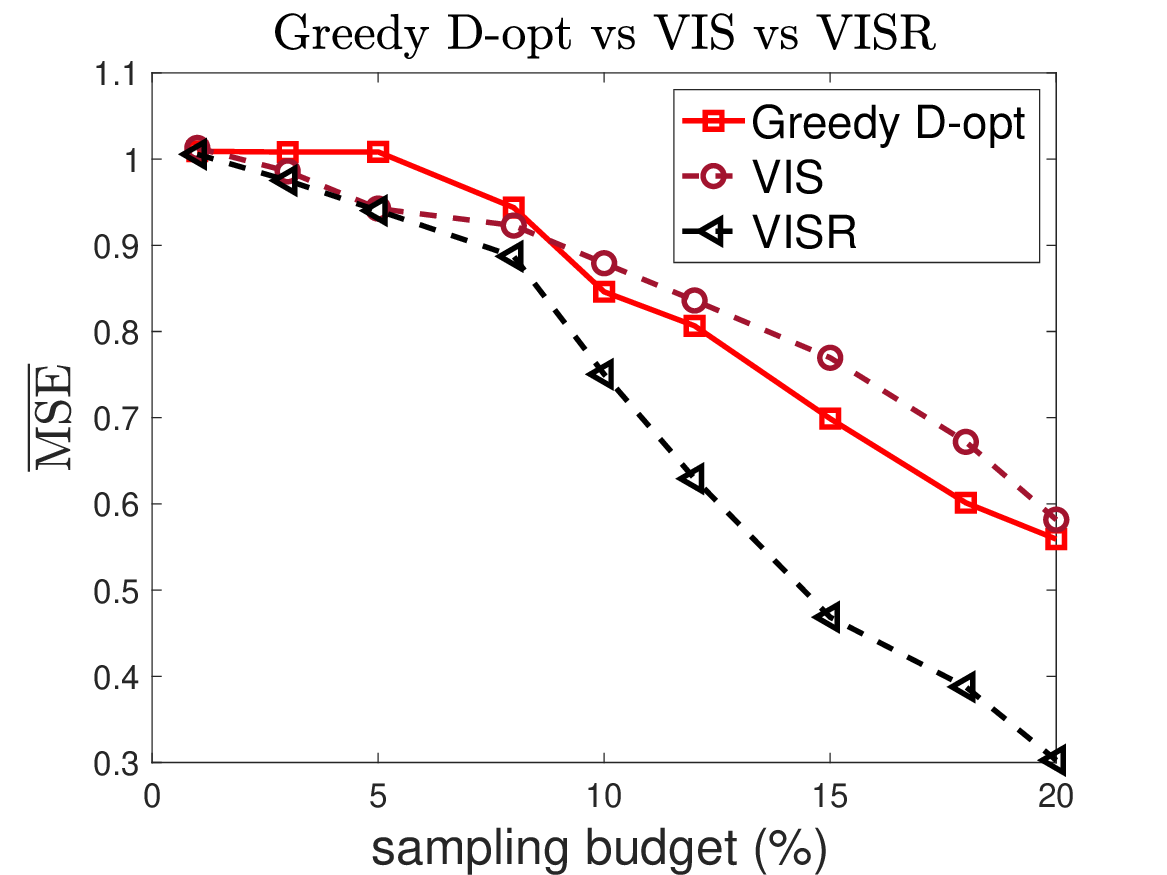}
            \caption{}
            \label{subfig:VIS_vs_VISR}
        \end{subfigure}
     \begin{subfigure}[b]{0.33\textwidth}
         \centering
         \includegraphics[width=\linewidth]{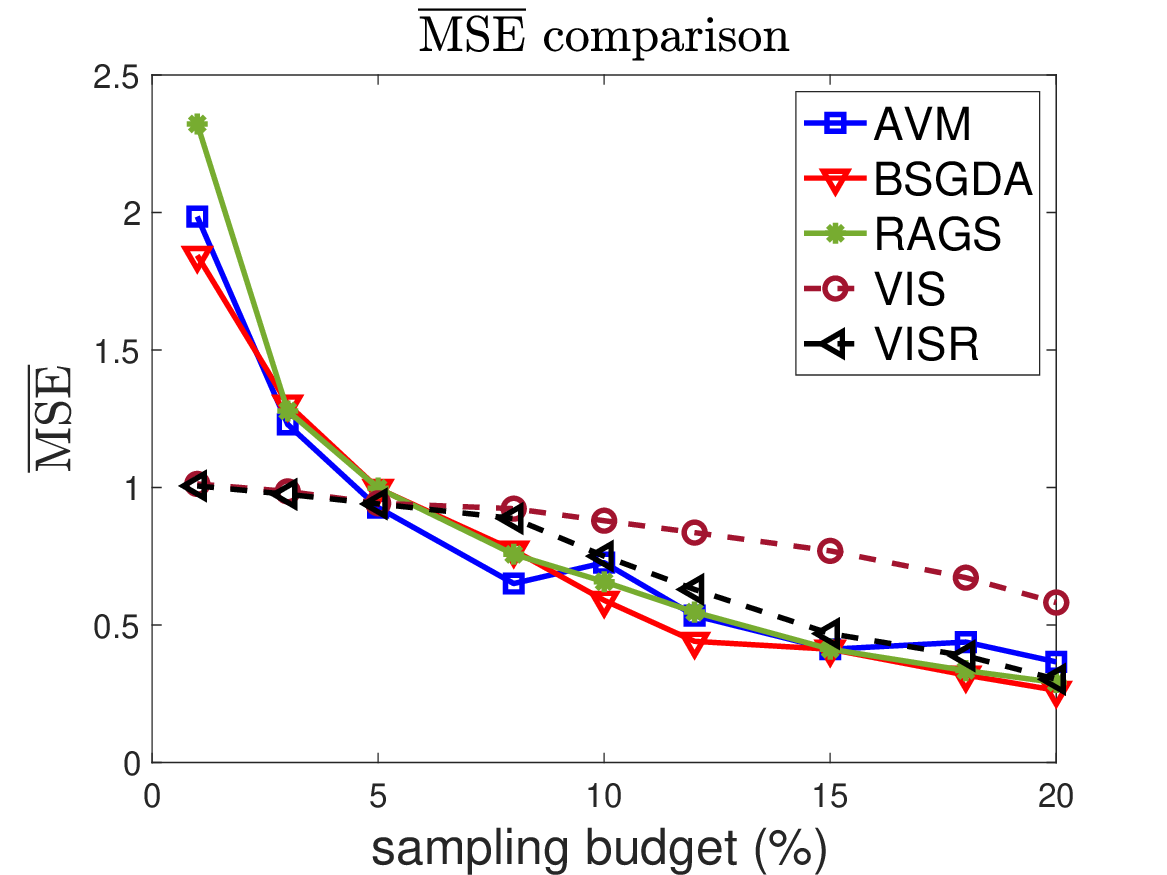}
         \caption{}
         \label{subfig:mse_sota}
     \end{subfigure}
     \begin{subfigure}[b]{0.33\textwidth}
            \centering
            \includegraphics[width=\linewidth]{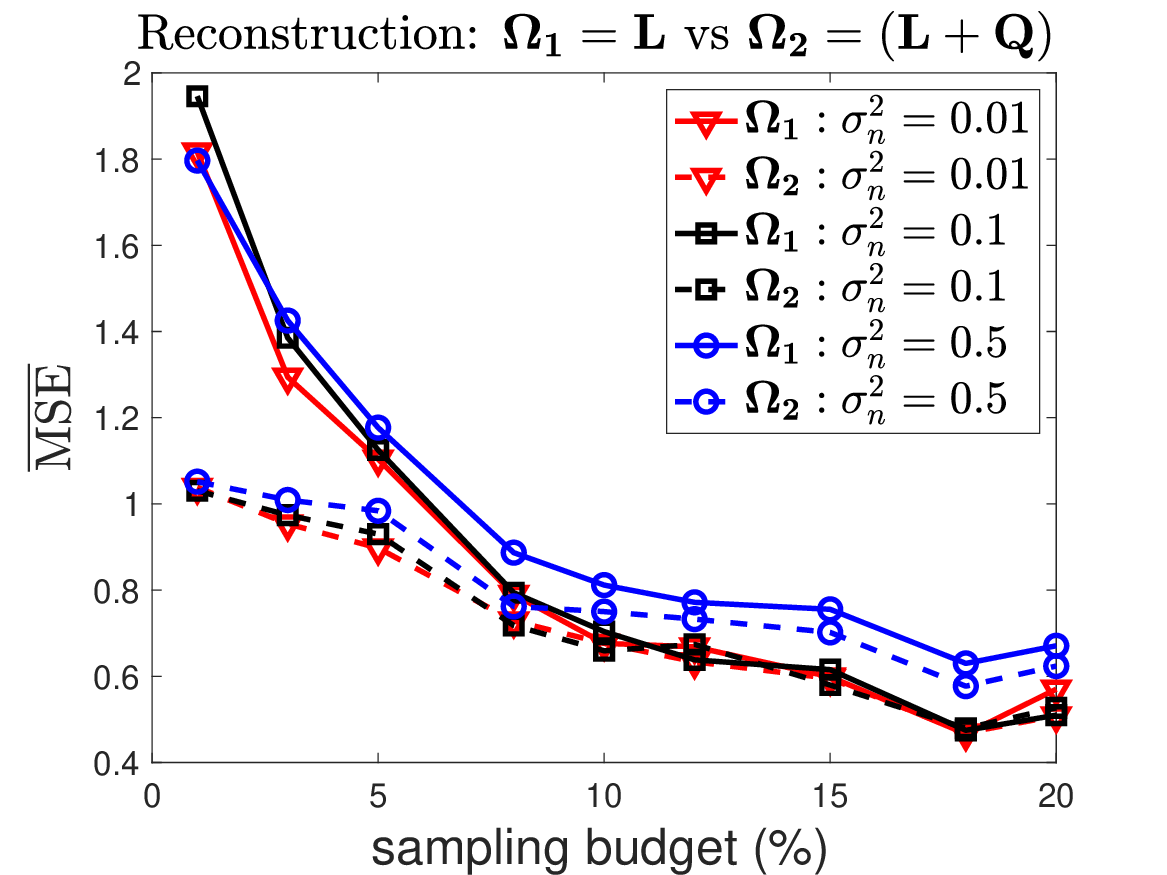}
            \caption{}
            \label{subfig:L_vs_LQ}
        \end{subfigure}
\caption{$\overline{\mathrm{MSE}}$ comparison in different settings. (a) comparison between VIS and VISR with greedy solution to D-optimal sampling objective, (b) comparison with other state-of-the-art sampling algorithms, (c) comparison of using $\Lm$ and $(\Lm + \Qm)$ for reconstruction from a randomly selected sampling set.}
\label{fig:mse_results}
\end{figure*}

\subsection{Comparison with other sampling algorithms}
We compare the proposed VIS and VISR with three state-of-the-art sampling set selection methods, AVM \cite{jayawant2021_avm}, BSGDA \cite{bai2020_bsgda}, and RAGS \cite{sridhara2024_pcsampling}. 
For the baselines, we follow the conventional two-step process, i.e., learn the CGL (without vertex importance) from data by solving \eqref{eqn:L_learning} and then employ AVM, BSGDA, and RAGS to obtain sampling sets. 
We reconstruct the sampled signal using \eqref{eqn:reconstruction} with $\Omegam_{1} = \Lm$. For VIS and VISR, we learn $(\Lm + \Qm)$ from \eqref{eqn:LQ_learning}, obtain the sampling sets as proposed in \autoref{sec:sampling_from_q}, and reconstruct the sampled signal using \eqref{eqn:reconstruction} with $\Omegam_{2} = (\Lm + \Qm)$. 
All sampling set selection algorithms are compared using the same data with increasing sampling
budgets. 
The results are shown in \autoref{subfig:mse_sota}. We observe that VIS and VISR achieve reconstruction accuracy that outperforms existing sampling algorithms by a factor of $2 \times$ at lower sampling budgets. 
As we increase the sampling budget, VIS results in subpar reconstruction accuracy. Conversely, VISR results in competitive reconstruction accuracy compared to existing sampling algorithms at higher sampling rates.

We further study the effect of using $\Omegam_{1} = \Lm$ and $\Omegam_{2} = (\Lm + \Qm)$ for reconstruction. To this end, we randomly sample the nodes, i.e, each node $i \in \Vc$ is assigned independently to the sampling set with probability $p = 1/N$, and reconstruct the signal using $\Omegam_{1}$ and $\Omegam_{2}$. 
We generated signal $\fv_{k}$, add noise $\nv$ with different noise levels $\sigma_{n}$ and compare $\overline{\mathrm{MSE}}$. From the results in \autoref{subfig:L_vs_LQ}, it is clear that  $(\Lm + \Qm)$ is a better model than $\Lm$ for graph signal reconstruction for all noise levels. Therefore, we can attribute the superior performance of VIS and VISR at a low sampling budget to using a better signal reconstruction model.

\section{Conclusion and Future work}
\label{sec:conclusion}
In this work, we showed that a sampling set can be directly obtained from the learned vertex importance. Compared to the conventional two-step approach of first learning the graph (CGL) and applying expensive sampling algorithms, our proposed approach of learning DDGL and using the vertex importance as auxiliary information can reduce computations and simplify sampling set selection. 
We proposed two sampling algorithms--- VIS and VISR and results show promising reconstruction accuracy compared to the existing sampling approaches. Building on this work, in future work we aim to combine the D-optimal sampling objective and DDGL learning into a single framework to learn efficiently the sampling set along with the graph. We will also perform a detailed study on the effect of vertex importance in graph signal reconstruction.


\bibliographystyle{IEEEbib}
\bibliography{strings,refs}

\end{document}